\documentclass[lettersize,journal]{IEEEtran}

\usepackage{amsmath,amsfonts}
\usepackage{algorithmic}
\usepackage{algorithm}
\usepackage{balance}
\usepackage{array}
\usepackage[caption=false,font=normalsize,labelfont=sf,textfont=sf]{subfig}
\usepackage{textcomp}
\usepackage{stfloats}
\usepackage{url}
\usepackage{verbatim}
\usepackage{graphicx}
\usepackage{cite}
\usepackage{acronym}
\usepackage{orcidlink}
\usepackage{colortbl}
\usepackage{threeparttable}
\usepackage{xcolor}
\usepackage[caption=false,font=normalsize,labelfont=sf,textfont=sf]{subfig}
\hyphenation{op-tical net-works semi-conduc-tor IEEE-Xplore}
\usepackage{amsthm}

\colorlet{RED}{red}
\colorlet{BLACK}{black}

\newcommand{\figref}[1]{Fig. \ref{#1}}
\newcommand{\secref}[1]{Section \ref{#1}}
\newcommand{\tabref}[1]{Tab. \ref{#1}}
\renewcommand{\quote}[1]{``#1''}

\DeclareMathOperator*{\argmin}{\arg\min}

\definecolor{layer400_800}{HTML}{FF8C00}
\definecolor{layer800_512}{HTML}{00FF00}
\definecolor{layer512_256}{HTML}{FF00FF}
\definecolor{layer256_10}{HTML}{FFFF00}
\definecolor{min_network}{HTML}{00BFFF}

\begin{document}
\acrodef{ai}[AI]{Artificial Intelligence}
\acrodef{ml}[ML]{Machine Learning}
\acrodef{snn}[SNN]{Spiking Neural Network}
\acrodef{cnn}[CNN]{Convolutional Neural Network}
\acrodef{ann}[ANN]{Artificial Neural Network}
\acrodef{shd}[SHD]{Spiking Heidelberg Digits}
\acrodef{ssc}[SSC]{Spiking Speech Commands}
\acrodef{sota}[SOTA]{state-of-the-art}
\acrodef{lif}[LIF]{Leaky-Integrate and Fire}
\acrodef{lod}[LOD]{Leading-One Detector}
\acrodef{lopd}[LOPD]{Leading-One Position Detector}
\acrodef{noc}[NoC]{Network-on-Chip}
\acrodef{fpga}[FPGA]{Field-programmable Gate Array}
\acrodef{aer}[AER]{Address Event Representation}
\acrodef{orf}[ORF]{OpenROAD-Flow-Script}
\acrodef{pdk}[PDK]{Process Design Kit}

\newtheorem{proposition}{Proposition}

\title{Lightweight LIF-only SNN accelerator using differential time encoding}

\author{Daniel Windhager$^\ast$\orcidlink{0009-0007-6443-6515}, Lothar Ratschbacher$^\dagger$\orcidlink{0000-0002-2631-0977}, Bernhard A. Moser$^{\ast\ast\ddagger}$\orcidlink{0000-0003-1859-046X}, Michael Lunglmayr$^\S$\orcidlink{0000-0002-4014-9681}
\thanks{$\hspace{-1em}^\ast$Intelligent Wireless Systems, Silicon Austria Labs, Linz, Austria, daniel.windhager@silicon-austria.com\\
$^\dagger$Intelligent Wireless Systems, Silicon Austria Labs, Linz, Austria, lothar.ratschbacher@silicon-austria.com\\
$^\ddagger$Institute of Signal Processing, Johannes Kepler University, JKU SAL IWS Lab, Linz, Austria, bernhard.moser@jku.at\\
$^{\ast\ast}$double affiliation: Software Competence Center Hagenberg (SCCH), 4232 Hagenberg, Austria\\
$^\S$Institute of Signal Processing, Johannes Kepler University, Linz, Austria, michael.lunglmayr@jku.at}
}

\markboth{IEEE TRANSACTIONS ON VERY LARGE SCALE INTEGRATION (VLSI) SYSTEMS}%
{Windhager \MakeLowercase{\textit{et al.}}: \title}

\maketitle

\begin{abstract}
\acp{snn} offer a promising solution to the problem of increasing computational and energy requirements for modern \ac{ml} applications. Due to their unique data representation choice of using spikes and spike trains, they mostly rely on additions and thresholding operations to achieve results approaching \ac{sota} \acp{ann}. This advantage is hindered by the fact that their temporal characteristic does not map well to already existing accelerator hardware like GPUs. Therefore, this work will introduce a hardware accelerator architecture capable of computing feedforward LIF-only \acp{snn}, as well as an accompanying encoding method to efficiently encode already existing data into spike trains. Together, this leads to a design capable of~$>\negmedspace99\%$ accuracy on the MNIST dataset, with $\approx{}0.29ms$ inference times on a Xilinx Ultrascale+ FPGA, as well as $\approx{}0.17ms$ on a custom ASIC using the open-source predictive 7nm ASAP7 PDK. Furthermore, this work will showcase the advantages of the previously presented differential time encoding for spikes, as well as provide proof that merging spikes from different synapses given in differential time encoding can be done efficiently in hardware. 
\end{abstract}

\begin{IEEEkeywords}
Spiking Neural Network, LIF-only, differential time, ASAP7, FPGA
\end{IEEEkeywords}

\section{Introduction}
\IEEEPARstart{T}{he} field of \ac{ai} and more specifically \ac{ml} is ever changing and growing, with a clear trend towards bigger and more expansive network architectures, which in turn means larger parameter spaces and datasets. While GPUs, faster memory and higher bandwidths can curb some of these issues, the fact remains that the most recent architectures, like the transformer-based GPT-4~\cite{gpt4}, currently require large server infrastructure, for inference as well as for training. Because of the large computational effort involved, less computationally expensive architectures have gained interest in academia and industry in recent years. One of these approaches is to utilize \acp{snn} instead of traditional \acp{ann}, due to their improved power efficiency~\cite{snn_vs_ann}, sparsity characteristics~\cite{Moser_2025} and, depending on the choice of the neuron model and spike encoding, decreased computational effort. This is especially important for embedded devices and edge applications, as these often impose hard limits on energy efficiency and inference time. Although they offer a promising solution, \acp{snn} introduce a different set of problems, such as the need for choosing an appropriate encoding method for the data. Even more important, GPUs and CPUs are not well suited to the task of accelerating \acp{snn} due to their asynchronous and event-based architecture, making efficient custom accelerator architectures essential. An additional issue is the encoding of vectorized data into spike sequences, which is less than obvious for \acp{snn}. This issue of choosing the encoding can be mitigated by incorporating the encoding into the training process, thus allowing the network to adjust based on the training data. Furthermore, the custom hardware architecture presented here, which improves upon a previous design~\cite{aicas2024}, allows for efficient processing of \acp{snn}, especially on resource constrained \acp{fpga}. 
\\
This work will extend both of these previously published concepts~\cite{iscas2024}, by providing a mathematical proof of the main spike merger network structure, as well as providing justification for the differential time encoding chosen in this and previous works. Additionally, this work will highlight the universality of the concept of learned encoding, by showcasing its performance for the MNIST~\cite{mnist} and CIFAR10~\cite{cifar10} dataset. Finally, this work also provides synthesis results for the presented accelerator architecture for the Xilinx Ultrascale+ ZCU102 MPSoC board as well as for the open-source predictive 7nm ASAP7 PDK~\cite{ASAP7}. These results demonstrate that the proposed architecture can be implemented efficiently on resource constrained digital hardware while still being competitive in terms of accuracy and inference speed.

\section{Background and related work}
Many different works regarding \acp{snn} have been proposed in the past, dealing with the theoretical and practical machine learning aspects as well as different architectures and designs capable of inference or even training the network using custom hardware. Although quite a few publications explore training and inference on GPUs~\cite{hao2024,qu,5955897,yanchenli,gpuranc}, many only train on GPUs to then deploy the trained model on custom hardware such as \acp{fpga} or even ASICs. A very prominent example is Intel's Loihi and Loihi 2~\cite{original_loihi} architecture, with the second generation being capable of simulating more than 1 million neuron cores on the chip, with fully programmable neuron models and flexible routing between the neurons using a \ac{noc}. Loihi is, contrary to many other architectures, even capable of on-chip learning, making it possible to use the same chip for training as well as inference. Other ASICs capable of \ac{snn} inference include the SpiNNaker~2~\cite{spinnaker,spinnaker2} which utilizes 152 parallel ARM cores with custom dedicated accelerators, connected by a flexible \ac{noc} to enable fast processing of sparse and non-sparse networks. While Loihi\footnote{Note that, at the time of writing Loihi is only available for research purposes by directly contacting the Intel Neuromorphic Research Community (NRC).} and SpiNNaker are designed to handle various different networks and neuron models, there are also designs which optimize with a special use-case in mind, such as SparrowSNN~\cite{sparrowsnn}. It is designed specifically for detecting heart disease using the MIT-BIH dataset~\cite{mit-bih} and a custom 22nm ASIC for acceleration. Although there exist many ASIC designs specifically for acceleration of \acp{snn}, even more designs use \acp{fpga} for this task, due to their lower cost for small quantities and their reprogrammability. While many of them focus on energy-efficiency as the main selling criteria of \acp{snn}~\cite{time_domain_neurons,hybrid_snn,snn_audio,ali2024}, very few architectures focus on \quote{pure} \acp{snn}, i.e. networks which only include \ac{lif} neurons. In fact, many publications combine regular \acp{ann}, such as \acp{cnn} or transformer based architectures with an activation layer consisting of \ac{lif} neurons~\cite{deepfire2,liu_hanwen,firefly,syncnn,s2n2,nevarez}, which somewhat diminishes the power efficiency aspect, since multipliers are once again necessary. To completely remove the necessity for multiplications, the neuron model must be chosen such that it can be computed with addition and shift operations only, which is why the authors of this work chose to realize their \ac{snn} networks using only an implementation-efficient variant of the \ac{lif} model. Additionally, many works that have been published in recent years use the \ac{aer} to represent spikes~\cite{liu_pan,mesquida,kim_park,madhuvanthi}, resulting in a large overhead (specifically due to the choice of using absolute time to represent spikes), impairing power and computational efficiency. Here, as described below, we argue for using a more efficient differential time encoding.

\section{Efficient spike representation}
Many of the works using the \ac{aer}, use absolute time to represent the time at which a spike or event occurred. However, this is not only suboptimal but can also lead to problems, especially for systems running continuously, as after some finite time the counter used to represent the current time will overflow. As an alternative, the authors proposed the differential time representation in~\cite{aisys} that can be shown to be almost equivalent, in terms of required bit length, to the absolute time representation in the worst case, whereas for most other cases it substantially reduces the bit width required to represent the same information. This is accomplished by encoding the time difference between a spike and its predecessor, instead of using the absolute time information. To represent spikes that occur at a time not representable with the specified bit width of the differential time, an overflow symbol is introduced (the highest representable number $2^b-1$, where $b$ is the chosen bit width), which signifies the passage of time without a spike.
\\\\
To illustrate why the AER is not ideal, consider a spike train with $K\in\mathbb{N}$ spikes for which the last spike occurs at some time $T\in\mathbb{N}_0$. Using absolute time, $\lceil\log_2(T+1)\rceil$ bits are needed to represent every individual spike, whereas the differential time encoding depends on the distribution of the times between the spikes (that are typically smaller than the maximum time). Assuming for the moment that we do not wish to use more than one symbol to represent a single spike, i.e. we do not allow overflow symbols, then the worst case, where all $K$ spikes occur at the latest possible moment, would also require a bit width of $\lceil\log_2(T+1)\rceil$. The best case for the differential time encoding method occurs when all $K$ spikes are exactly evenly spaced, reducing the required bit width to $\lceil\log_2(\frac{T+1}{K})\rceil$, thus giving a substantial improvement. The exact savings in terms of bit width depend heavily on the distribution of the spikes, however, even in the worst case the differential time representation at most devolves into the same performance as the absolute time representation. Notice that in the current discussion no overflow symbols have been considered. As will be shown next, the overflow symbols actually improve the efficiency of the differential time representation in terms of the required bit width.

\subsection{Sampled time compared to differential time}
Another possible way to represent spike trains is to simply sample the spike trains and use the resulting sequences of ones and zeros (assuming only positive spikes of amplitude one are used), which could be represented as 
\begin{equation}
    S = (s_1, s_2, ..., s_{n-1}), s_i \in \{0, 1\}.
\end{equation}
This sampled version of the spike train is actually equivalent to the differential time representation, using only a single bit, with 1 as the spike amplitude, whereas 0 can be interpreted as overflow symbol signifying that one sampling time has passed where no spike was triggered. In the sampled version, a 0 signifies that at this point in time no spike occurred whereas a 1 signifies that a spike occured. In contrast, when using the differential time representation with a single bit, the value 0 indicates that a spike occurs in zero time, i.e. now, whereas the value 1 is the highest representable value and is thus used as the overflow symbol, signifying that no spike occurs at this moment in time. 
\\\\
Clearly, if spikes occur less often, a larger bit width for the differential time representation would yield an advantage over the sampled time representation, but it is not immediately obvious which bit width is ideal. Intuitively, a bit width that is too low for a given spike train produces a large number of overflow symbols thus reducing efficiency, whereas an exceedingly large bit width would never produce overflow symbols but wastes many bits on encoding a smaller time difference, as described previously. The most efficient bit width is dependent on the distribution of spikes in the spike trains one wants to encode. To be more precise, the overall distribution of the time differences between the spikes, determines the ideal bit width to choose. In order to properly optimize for this, we need a cost function to compare costs of different approaches. One candidate for this is to simply use $N_S b$, where $N_S$ is the number of symbols required to represent the entire spike train and $b$ is the bit width of a single symbol. Using this cost function, it is simple to arrive at the optimal cost of encoding the spike train, by realizing that every difference $d_i\in\mathbb{N}_0$ for which $0 \le d_i < 2^b-1$ holds, can be represented with a single symbol. Similarly, every difference $d_i$ for which $2^b-1 \le d_i < 2^{b+1}-1$ holds, can be represented with two symbols, and so forth. In general, the number of symbols for a given time difference and bit width is $\left\lceil d_i / (2^b-1)\right\rceil$. Therefore the total number of symbols for the entire spike train is 
\begin{equation}
    N_S(b) = \sum_{i} \left\lceil \frac{d_i}{2^b-1} \right\rceil
\end{equation}
leading to the overall bits required to encode the entire spike train as
\begin{equation}
    B_S(b) = bN_S(b) = b\sum_{i} \left\lceil \frac{d_i}{2^b-1} \right\rceil.
    \label{eq:minimum}
\end{equation}
Ideally we would now like to find
\begin{equation}
    b^* = \argmin_{b\in\mathbb{N}} B_S(b)
    \label{eq:opt}
\end{equation}
which, due to the ceiling function, is unfortunately not possible to find analytically. It is however possible, and quite straightforward, to find a minimum by computing all values of $B_S(b)$ for practically relevant $b$-values. To do this, we first generate a histogram of all the differential time spikes that we wish to encode. An example of all the differential time spikes that occur in a 400-128-10 network using the MNIST test dataset can be seen in \figref{fig:histogram}. Instead of having to iterate through all of the differences as given in \eqref{eq:minimum}, we now simply calculate the number of symbols required to encode each of the differential times in the histogram for a given bitwidth, which yields
\begin{equation}
    B_S(b) = b\sum_iK_i\left\lceil\frac{i}{2^b-1}\right\rceil 
\end{equation}
where $K_i$ is the number of differential time spikes with difference $i$ as seen in \figref{fig:histogram}. The resulting plot of $B_S(b)$ for different values of $b$ can be seen in \figref{fig:ideal_bitwidth}, which shows that the bit width for the differential time encoding leading to the least amount of bits required, is 2. 
\begin{figure}
    \centering
    \includegraphics[width=\linewidth]{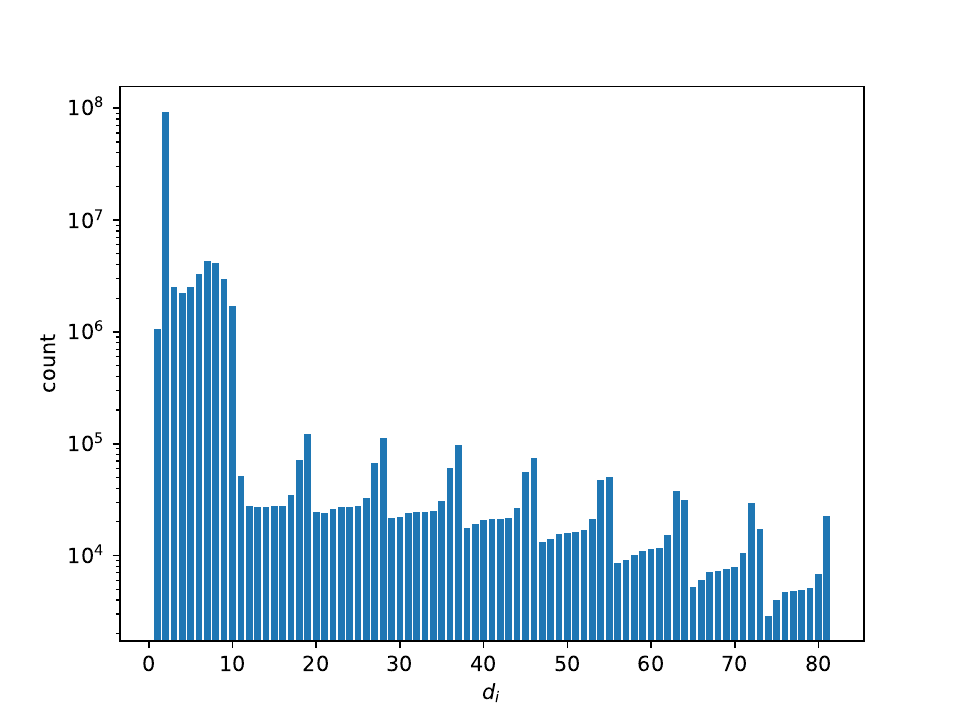}
    \caption{Histogram for the difference times when using the test set of the MNIST dataset with 9x9 learned encoding as shown in \secref{sec:learned_encoding}.}
    \label{fig:histogram}
\end{figure}
~
\begin{figure}
    \centering
    \includegraphics[width=\linewidth]{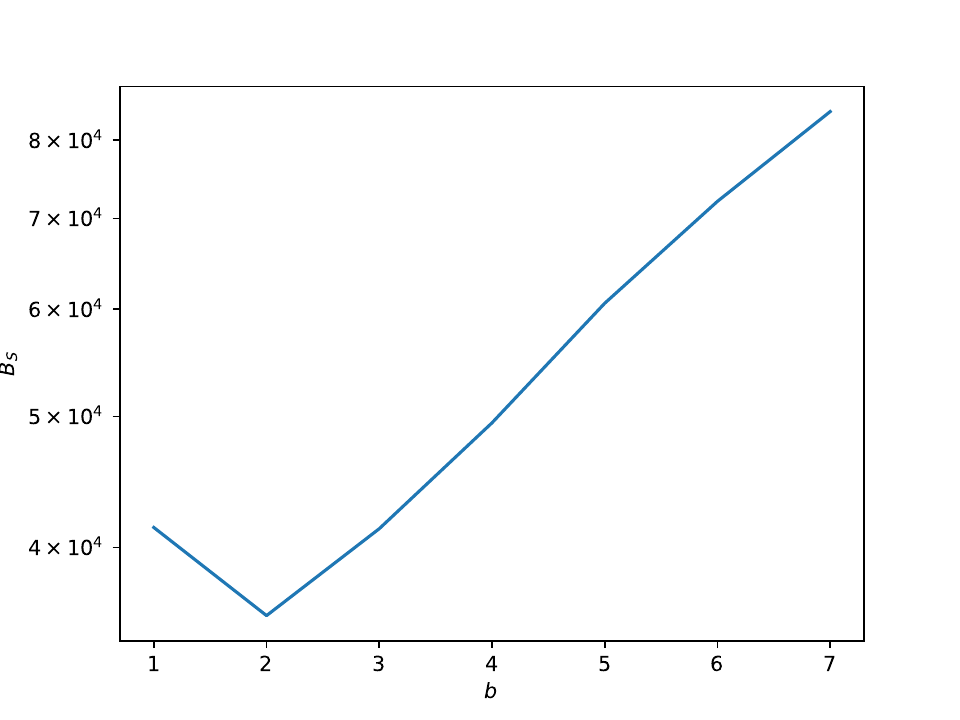}
    \caption{Overall number of bits $B_S$ required to encode the entire spike train based on chosen bit width $b$ for the differential time.}
    \label{fig:ideal_bitwidth}
\end{figure}

When analyzing the three discussed encodings: absolute, sampled or differential, we gave reasons in terms of efficiency advocating for differential time encoding. Considering implementation of neuron processing, one might have reservations for differential time encoded representations as one has to determine across different spike trains which spike to process next in chronological order. Although this is trivial for absolute time encoded and sampled sequences, it is not obvious for differential time encoded spike trains where ordering is given within a spike sequence, but is not guaranteed across different sequences. However, as we show in the next section, such order can be restored by a simple procedure without the need for conversion to other representations.
\section{Comparing spike trains in differential time}
\label{sec:spike_sorter}
In the following, we show that multiple differential time encoded spike sequences can be merged without calculating the cumulative sums first and subsequently merging the sequences in absolute time. We describe the proof for two sequences, since it can trivially be extended to any number of sequences.

Let $ (A =\{a_i \}_{i=1}^{n}, \leq)$  be the ordered set of the absolute spike times of a spike train (the most commonly used ordering of ordered sets is the natural ordering of time values), and $(B=\{b_j \}_{j=1}^{m}, \leq)$ an ordered set of absolute spike times of a second spike train, defined in the same way. Note that for clarity all spikes and spike trains given in differential time in this section will be prefixed by $\Delta{}$ in order to avoid confusion. All other spikes are assumed to be given in absolute time unless stated otherwise.

The ordered sets of differential times are then defined such that $(\Delta A =\{\Delta a_i = a_i-a_{i-1}\}_{i=1}^{n}, a_{i-1} \leq a_i)$ using $a_0 = 0$.

The merged sequence $Y = {A \cup B}$ is realized by the order relation. Its ordered set of differential times $\Delta Y$ is again defined in the same way as described for $\Delta{}A$ above. The cumulative sum of the elements of $\Delta Y$ will again give $Y$. 

\begin{proposition} 
    The ordered set $\Delta Y$ can be generated directly from $\Delta A$ and $\Delta B$ using the following procedure: Compare the first elements of the input sets $\Delta A$ and $\Delta B$ to each other. Append the minimum to the output set $\Delta Y$, remove the minimum from its respective set, and update the other first element (when using more than two sets: update all the other first elements) by subtracting the minimum from it. Repeat until all input sets are empty.
    \label{prop:merge_spikes}
\end{proposition}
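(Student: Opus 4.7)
The natural approach is induction on the number of iterations of the procedure, maintaining a loop invariant that relates the current ``first elements'' of the working sequences to the true absolute times. I would set up the following notation: at iteration $k$, let $p$ and $q$ be the indices of the next unprocessed spikes in $A$ and $B$ respectively (so initially $p = q = 1$), let $y_k$ denote the $k$-th element appended to the merged absolute time sequence $Y$, and let $\hat{a}$, $\hat{b}$ denote the values of the first entries of $\Delta A$ and $\Delta B$ as modified by the procedure so far. The invariant I would prove is
\begin{equation*}
    \hat{a} = a_p - y_k, \qquad \hat{b} = b_q - y_k,
\end{equation*}
with the convention $y_0 = 0$, so that $\hat{a}$ and $\hat{b}$ always represent the \emph{remaining} differential times from the last emitted merged spike to the next pending spike in each input sequence.

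First I would verify the base case $k = 0$: the invariant reduces to $\hat{a} = a_1 = \Delta a_1$ and $\hat{b} = b_1 = \Delta b_1$, which holds by definition of the differential encoding. Then, for the inductive step, I would assume the invariant at step $k$ and consider one more iteration. Without loss of generality assume $\hat{a} \leq \hat{b}$. By the invariant this is equivalent to $a_p \leq b_q$, so $a_p$ is indeed the smallest unprocessed absolute time in $A \cup B$ and therefore equals $y_{k+1}$. The procedure appends $\hat{a} = a_p - y_k = y_{k+1} - y_k = \Delta y_{k+1}$ to $\Delta Y$, which is exactly the correct differential entry. It then removes $\hat{a}$ from $\Delta A$, exposing the next entry $\Delta a_{p+1} = a_{p+1} - a_p$ as the new first element; since $a_{p+1} - a_p = a_{p+1} - y_{k+1}$, the invariant is reestablished for $A$. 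For $B$, the procedure replaces $\hat{b}$ by $\hat{b} - \hat{a} = (b_q - y_k) - (y_{k+1} - y_k) = b_q - y_{k+1}$, again reestablishing the invariant. The symmetric case $\hat{b} < \hat{a}$ is identical up to relabeling, and the generalization to more than two input sequences only requires subtracting $\min$ from all other first elements simultaneously, which the proposition already states.

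The remaining item is termination and the handling of exhausted sequences: once, say, $\Delta A$ is empty, the remaining elements of $\Delta B$ are already correctly expressed as differential times relative to the last emitted $y_k$ (by the invariant applied one last time), so simply appending them finishes the merge. Overall, I anticipate the main obstacle to be purely notational, namely stating the invariant so that it simultaneously captures (i) which spikes remain unprocessed in each source and (ii) that the ``mutated'' first elements are offsets with respect to $y_k$ rather than with respect to the previous spike in their own source sequence; once this is pinned down, the algebra in the inductive step is a one-line computation and no case analysis beyond $\hat{a} \leq \hat{b}$ versus $\hat{b} < \hat{a}$ is needed.
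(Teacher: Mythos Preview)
Your proposal is correct and follows essentially the same idea as the paper: both arguments hinge on the observation that, after each step, the current first elements $\hat a,\hat b$ are offsets from a common reference time (your $y_k$, the paper's ``shifted set zero''), so comparing them is equivalent to comparing the next absolute spike times. The paper leaves this as an informal ``moving the zero timeline'' picture, whereas you package it as an explicit loop invariant $\hat a=a_p-y_k$, $\hat b=b_q-y_k$ and carry out the induction; this is simply a more rigorous rendering of the same argument rather than a different route.
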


\begin{proof}
    Proofing the above proposition can be easily done based on the observation that the first elements of differential time encoded sets are in absolute relation to each other as they can be interpreted as absolute times to a set zero time. See Fig.~\ref{fig:diffspikes} for depiction. An ordering with respect to their difference times is guaranteed only for the first elements of the differential time encoded sets. This means that the first delta time of the merged spike train $\Delta Y$ must be the minimal element among the first elements of the input spike trains. Updating the other first elements (in the general case of more than two spike trains) by subtracting this minimum time can be interpreted as moving the set zero timeline (i.e. resetting the timeline to a new shifted zero). After the subtractions, all first elements of the differential time-encoded input spike trains are in absolute relation again, although now with the shifted set time. Outputting the minimum elements gives the sequence $\Delta Y$.
\end{proof}

\begin{figure}
	\centering
	\includegraphics[width=.8\linewidth]{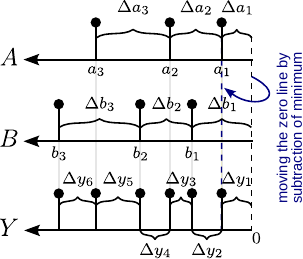}
	\caption{Merging of spike sequences given in differential time.}
	\label{fig:diffspikes}
\end{figure}

\section{Hardware architecture}
\label{sec:hw_arch}
Having established that the differential time representation has benefits compared to other representations, the next step is to explore how this can be used in implementations for \ac{snn} inference. This work is an extension of the design in~\cite{iscas2024}, which introduced a hardware architecture capable of accelerating feedforward \acp{snn} consisting of any number of layers of arbitrary size using the standard \ac{lif} neuron, while also using differential time spikes as inputs and outputs. The standard version of the \ac{lif} neuron is given as
\begin{align}
    \begin{split}
        P_{k} &= P_{k-1}\beta^{\Delta t} + \sum_i w_i s_i - s_{out}\\
        s_{out} &= \begin{cases}
        \mbox{$\theta$,} & \mbox{if } P \ge \theta\\
        \mbox{0,} & \mbox{if } P < \theta,
        \end{cases}
    \end{split}
    \label{eq:lif_neuron}
\end{align}
where $P\in\mathbb{R}$ is the neuron potential, $\beta\in[0,1]$ is the decay rate of the potential, $\Delta{}t\in\mathbb{N}_0$ is the time that passed since the last event, $k\in\mathbb{N}_0$ is the time where at least one input spike $s_i \in \{0, 1\}$ occurred, $\sum_i w_i s_i$ is the weighted sum of the input spikes (at the current event) and their corresponding weights, $s_{out}\in\mathbb{R}$ is responsible for resetting the neuron potential after a spike is triggered and $\theta\in\mathbb{R}$ is the threshold. The authors assume here that $\beta=0.5$, as it was shown by the same authors in~\cite{aicas2024} that this does not diminish the capability of the neuron, as well as $\theta=1$. Therefore, the hardware only needs to be able to store the continuously changing values of the neuron potential $P_k$ for each neuron, as well as the weights $w_i$, learned during the training stage of the network. As such, the hardware architecture needs a way to correctly find the earliest spike among all spike trains entering the network, as established in \secref{sec:spike_sorter}, in order to properly define $\Delta{}t$ needed for decaying the neuron potential in \eqref{eq:lif_neuron}.

\subsection{Merging the spikes in hardware}
This merging of the spikes can be accomplished by directly implementing Proposition~\ref{prop:merge_spikes}, leading to the spike merger element shown in \figref{fig:spike_merger}, with the variables $\Delta{}\hat{a}$ and $\Delta{}\hat{b}$ representing the continuously updated $\Delta{}a_i$ and $\Delta{}b_j$ respectively. This update is either by subtraction or by replacement with the next difference time of the spike train.

\begin{figure}
    \centering
    \includegraphics[width=0.8\linewidth]{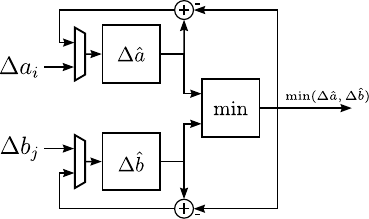}
    \caption{Spike merger element capable of merging two separate spike trains given in differential time.}
    \label{fig:spike_merger}
\end{figure}
The hardware block in \figref{fig:spike_merger} merges two separate spike trains given in differential time, producing a new spike train given in differential time, containing all spikes from both input spike trains. By cascading the spike merger elements, any number of spike trains can be efficiently combined into a single spike train, using a tree structure of depth $\lceil\log_2(M)\rceil$ with $M-1$ spike merger elements, where $M$ is the number of separate spike trains entering the network.
\\
While this merger element allows finding the spike that occurs next, it obfuscates the information of which spike train produced that spike. This information is necessary for selecting the weights that will later be added to the respective neuron potentials, since each spike train is assigned a separate set of weights during the training stage. Fortunately, finding the producing spike train can be solved relatively easily by introducing a separate vector which is extended by one bit during each stage of the $\log_2(M)$ tree. Depending if the upper or lower spike train produced the minimum for each spike merger element, either a $1$ or a $0$ is appended to the vector, thus producing an index identifying the originating spike train.
\subsection{Processing the neuron layer}
The spike merger tree is only needed once for the entire network, and receives all of the input spike trains. The produced differential time to the next spike, as well as the index of the producing spike train, i.e. the synapse index, are then sent into the first layer of the network. The synapse index can be used to directly address the weights in memory, providing the weights of all the synapses in the next layer which are connected to the synapse that produced the input spike. All retrieved weights are then handed in parallel to the respective neuron cores, which hold the neuron potential for every neuron in the layer. These weights are added directly to the neuron potential registers after which the thresholding operation and spike generation is handled by the neuron cores.
\\
The layer controller in turn is responsible for processing the differential time produced by either the previous layer or the spike merger tree, in order to generate the control signals necessary for decaying the neuron potential in each core. Furthermore, it is also responsible for ensuring that the neuron cores generate spikes at the correct clock cycle, guaranteeing proper timing between layers, which also includes delaying the forwarding of the differential time until all spikes of the current layer have been processed (for \ac{lif} neurons, output spikes can only be triggered at times of input spikes). As each of the neuron cores produces only a single bit of information ($0\rightarrow$ no spike, $1 \rightarrow$ spike), these bits are then fed into a \ac{lopd}~\cite{fast_lp_lod,vlsi_lod,vlsi_lp_lod,approx_lod}, to once again arrive back at the index of a synapse that triggered a spike, which can be fed into the next layer. Therefore, each of the layers present in the network is directly instantiated in the hardware and can work in parallel to every other layer, as shown in \figref{fig:hw_arch}.
\begin{figure*}
    \centering
    \includegraphics[width=0.8\linewidth]{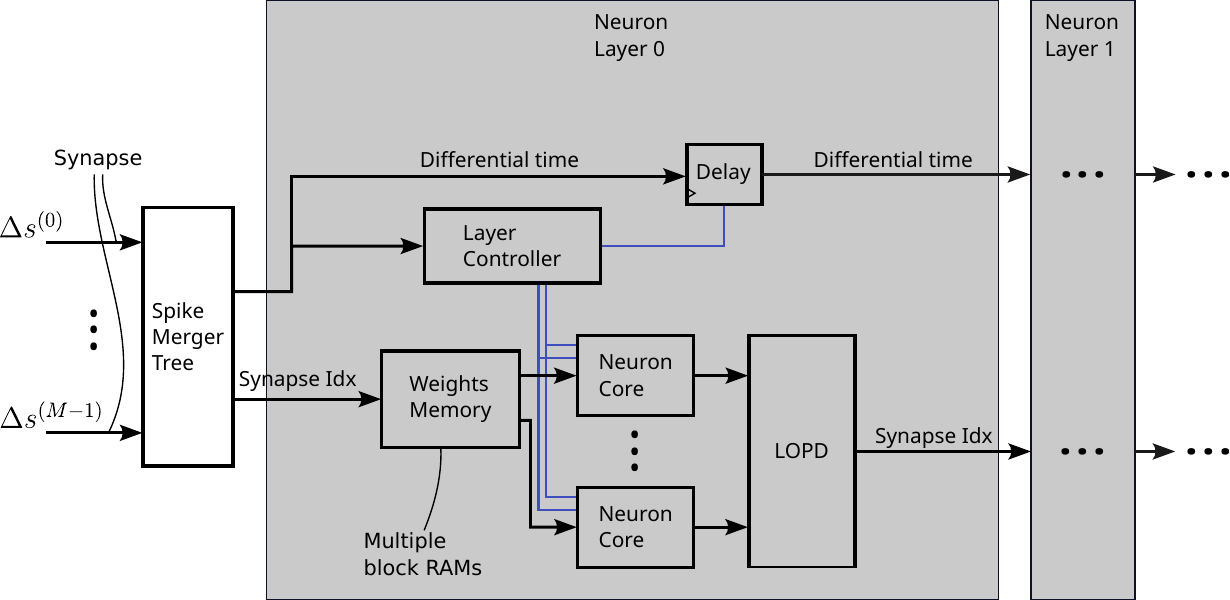}
    \caption{Hardware architecture capable of accelerating arbitrarily deep and wide feedforward networks of \ac{lif} neurons~\cite{iscas2024}.}
    \label{fig:hw_arch}
\end{figure*}

\section{Synthesis Results}
Complementary to the \ac{fpga} synthesis results that will be shown below, an ASIC synthesis run was also completed using the open-source OpenROAD software~\cite{openroad} and the accompanying \acp{orf}. While there are quite a few open-source \acp{pdk} available, such as the SKY130 and IHP 130nm \ac{pdk}~\cite{ihp_pdk}, there are now also predictive \acp{pdk} available for smaller technology nodes. One is the openly available ASAP7 \ac{pdk}~\cite{ASAP7}, which was used to run the synthesis for the design shown in \figref{fig:hw_arch}. While the \acp{orf} already provide built-in support for the ASAP7 \ac{pdk}, a lot of other tools are necessary to complete the flow, e.g. KLayout and Yosys among others, all of which are thankfully provided in many pre-built Docker containers. The one chosen here is the container hosted by the Institute for Integrated Circuits and Quantum Computing at the Johannes Kepler University, which can be found in~\cite{iic_osic_tools}. 
\\\\
Apart from initially setting up the flow, another big issue compared to the \ac{fpga} implementation is the use of blockrams for storing the weights of the network. While readily available in modern \acp{fpga}, synthesizing them using regular registers would not only be wasteful in terms of area and power, but has also resulted in non-terminating runtimes of the OpenROAD software, potentially due to diverging optimization runs during floorplanning and place \& route optimization steps. Therefore, it is necessary to replace the blockrams with dedicated SRAM blocks, which are publicly available for the ASAP7 technology node~\cite{asap7_sram}, although they have not been in use widely, according to discussions with the maintainers. While the \ac{fpga} implementation can easily access multiple blockrams in parallel to achieve very wide, i.e. high-bit-width, access patterns, for the ASIC this means instantiating multiple SRAMs in order to achieve the same bit width. Theoretically it is possible to build SRAM blocks with very wide access patterns, however none of the open-source software tools available at the time of writing were capable of generating such SRAM blocks for the ASAP7 \ac{pdk}. While it is possible to add logic to simply collect the necessary data over several clock cycles from a smaller number of SRAM blocks, this would have lessened the comparability of the ASIC to the FPGA implementation since quite a few modifications to the RTL would have been required. Therefore, it was chosen to simply instantiate multiple SRAM blocks in parallel, in order to achieve the necessary bit widths, which obviously leads to less than ideal floorplanning and routing, thus negatively influencing the performance and power requirements of the chip. The final layout of the entire chip can be seen in \figref{fig:asic}, with the respective areas and color coding shown in \tabref{tab:asic_area}. Note that we consider the results for the ASIC synthesis in terms of clock speed and area, as informative bounds. With optimized SRAM blocks we expect that the clock speed can be significantly increased and the area further reduced. This also becomes evident when looking at the power consumption for the ASIC shown in \tabref{tab:asic_energy}, which highlights the fact that the macros, i.e. the SRAM blocks, are responsible for more than $67\%$ of the total power consumption of the chip, as well as $\approx30\%$ of the total area. In contrast, the entirety of the combinational and sequential design uses up a mere $\approx3.7\%$ of the entire area and $\approx{}18.7\%$ of the power (disregarding the power used for distributing the clock). Note that the ASIC flow was only completed for the accelerator structure optimized for the MNIST dataset, since the Cifar-10 version would have required even more SRAM blocks, which the current verson of the flow scripts did not allow to be placed.

\begin{figure}
    \includegraphics[width=\linewidth]{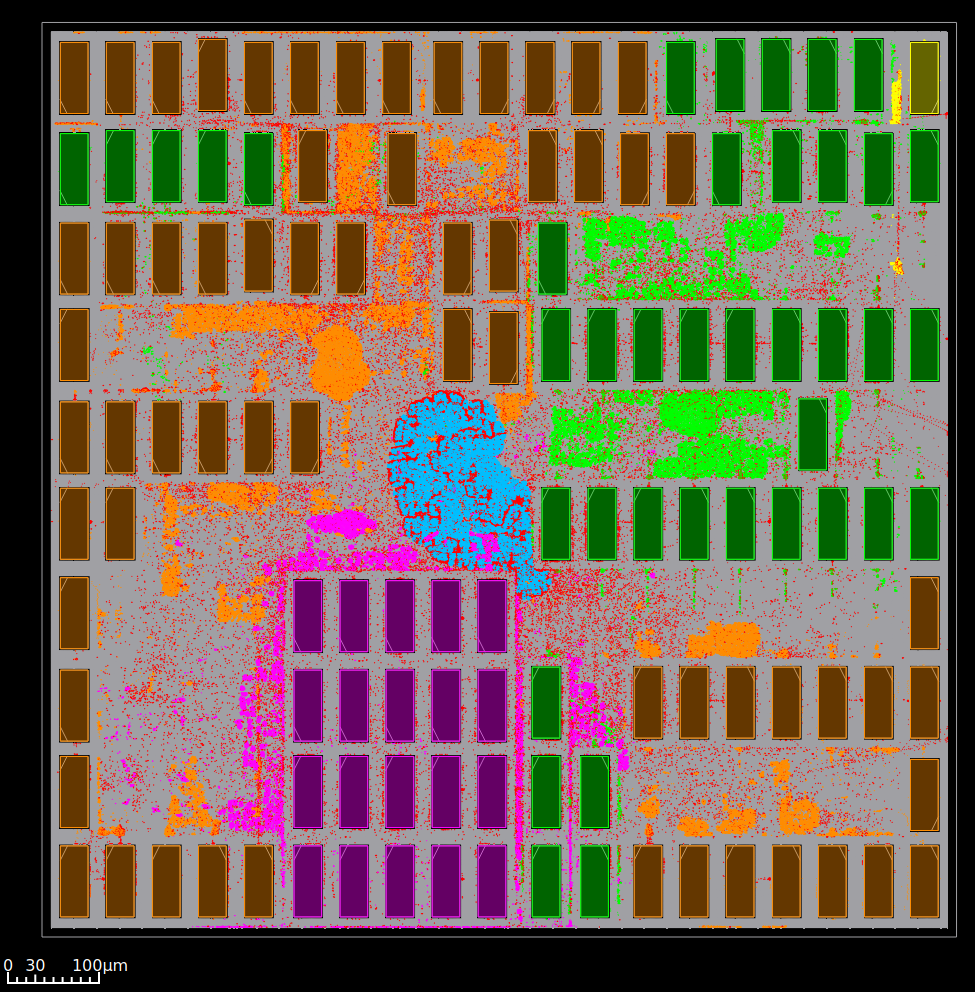}
    \caption{ASIC layout using open-source \acp{orf} and ASAP7 \ac{pdk} with 124~SRAM blocks of size 64x1024 bits for a fully connected feedforward network of size 400-800-512-256-10. The color coding for the different blocks can be seen in \tabref{tab:asic_area}.}
    \label{fig:asic}
\end{figure}
\bgroup
\def\arraystretch{1.2}
\begin{table}
    \centering
    \caption{Color coding and respective area utilization for each component of \figref{fig:asic}.}
    \begin{tabular}{llrr}
        \textbf{Color}            & \textbf{Component}& \multicolumn{2}{c}{\textbf{Area}} \\\hline
        \cellcolor{red}~          & Top level         &   1.601e-03 $mm^2$ & 0.160\%    \\
        \cellcolor{min_network}~  & Spike Merger Tree &   7.866e-03 $mm^2$ & 0.787\%    \\
        \cellcolor{layer400_800}~ & Layer 400-800     &  13.764e-03 $mm^2$ & 1.376\%    \\
        \cellcolor{layer800_512}~ & Layer 800-512     &   8.872e-03 $mm^2$ & 0.887\%    \\
        \cellcolor{layer512_256}~ & Layer 512-256     &   4.685e-03 $mm^2$ & 0.469\%    \\
        \cellcolor{layer256_10}~  & Layer 256-10      &   0.259e-03 $mm^2$ & 0.026\%    \\
                                  & SRAM blocks       & 292.623e-03 $mm^2$ & 29.260\%   \\\hline
                                  & Total             & 328.069e-03 $mm^2$ & 32.807\%
    \end{tabular}
    \label{tab:asic_area}
\end{table}
\egroup

\bgroup
\def\arraystretch{1.2}
\begin{table}
    \centering
    \caption{Power requirements for the ASAP7 chip running at 538MHz.}
    \begin{tabular}{p{1.5cm}|p{1cm}p{1cm}p{1cm}p{1cm}|p{0.7cm}}
                      & Internal [W] & Switching [W] & Leakage [W] & Total [W]  & \\\hline
        Sequential    &        2.53e-02    & 4.74e-04            & 6.97e-06    &  2.58e-02  &   9.8\% \\
        Combinational &        8.93e-03    & 1.46e-02            & 5.36e-05    &  2.36e-02  &   8.9\% \\
        Clock         &        2.20e-02    & 1.39e-02            & 1.53e-06    &  3.59e-02  &  13.6\% \\
        Macro         &        1.79e-01    & 0.00e-00            & 0.00e-00    &  1.79e-01  &  67.7\% \\\hline
        Total         &        2.35e-01    & 2.89e-02            & 6.21e-05    &  2.64e-01  & 100.0\% \\
    \end{tabular}
    \label{tab:asic_energy}
\end{table}
\egroup

\bgroup
\def\arraystretch{1.3}

\begin{table*}
	\centering
    \begin{threeparttable}
	\caption{Comparison to other hardware accelerator architectures.}
	\begin{tabular}{lrrrrrrrr}
        Work                   & Process          & Dataset  & Accuracy [\%]  & Inferences/sec & Clock freq. [MHz] & Area [mm²]    & \#Multipliers & Power [W] \\\hline
        This work              & ASAP7            & MNIST    & \textbf{99.03} & $\approx6000$  & \textbf{538}      & \textbf{1}    & \textbf{0}    & 0.26 \\
        \cite{sparse_snn_vlsi} & 10nm FinFET CMOS & MNIST    & 97.9           & \textbf{6250}  & 509               & 1.99          & N/R$^\dagger$ & 0.006$^\ast$ \\
        \cite{spinnaker_mnist} & 130nm L130E      & MNIST    & 95.01          & $\approx 50$   & 150               & 102           & N/R$^\dagger$ & $\approx0.5$\\
        \cite{sub_uw_snn}      & 65nm LP CMOS     & MNIST    & 97.6           & 2              & 0.07              & 1.72          & N/R$^\dagger$ & \textbf{0.3e-6} \\\\

        Work                       & FPGA    & Dataset  & Accuracy [\%]  & Inferences/sec         & Clock freq. [MHz] & \#LUTs         & \#DSP blocks  & Power [W] \\\hline
        This work~\cite{iscas2024} & ZCU102  & MNIST    & 99.03          & $\approx$ \textbf{3400} & \textbf{300}      & \textbf{56520} & \textbf{0}    & \textbf{2.44} \\
        \cite{syncnn}              & ZCU102  & MNIST    & \textbf{99.60} & 1631                   & 200               & 224690         & 555           & N/R$^\dagger$ \\
        \cite{fang}                & XCZU9EG & MNIST    & 99.20          & 2124                   & 125               & 155952         & 1795          & 4.5 \\
        This work                  & ZCU102  & Cifar-10 & 55.15          & $\approx2500$          & 252               & 149445         & \textbf{0}    & 6.63 \\
        \cite{syncnn}              & ZCU102  & Cifar-10 & 90.8           & 62                     & 200               & N/R$^\dagger$  & N/R$^\dagger$ & N/R$^\dagger$ \\
    \end{tabular}
	\label{tab:perf_comparison}
    \begin{tablenotes}
        \small
        \item[$\dagger$] Not reported 
        \item[$\ast$] Uses separate power domain with different voltage and clock gating of unused SRAM cells to reduce power 
        \end{tablenotes}
    \end{threeparttable}
\end{table*}
\egroup

\section{Learning the Encoding}
\label{sec:learned_encoding}
The hardware described in \secref{sec:hw_arch} processes spikes given in differential time, however, this representation does not match the data that is often given as equidistantly spaced vector data, such as pixel maps obtained from cameras. Although there are approaches for sensors that directly produce spikes, such as event-based cameras or level-crossing ADCs~\cite{samplingproto,nw_lc_adc,window_lc_adc}, usually the equidistantly sampled data needs to be converted into spike trains, a process that is referred to as encoding the data. While many different encoding methods have been proposed, with some of them having even been integrated into learning frameworks like SNNTorch~\cite{snntorch}, it is the responsibility of the user to choose the best encoding method for the given dataset and problem. The authors of this work propose to learn this encoding in conjunction with the network weights during the training process of the network. For image based datasets such as MNIST~\cite{mnist} or Cifar-10~\cite{cifar10} this is accomplished by selecting square patches from the pixels, which are then serialized. The serialized pixel values are then immediately fed into a \ac{lif} neuron that is responsible for the encoding, as can be seen in \figref{fig:patch}. Inside the network, typically the inputs to the \ac{lif} neuron are in the set $\{0,1\}$, thus making multiplication with the weights unnecessary. However, when the pixel values are directly used as inputs, learning the weights would require multiplications, which is undesirable from an area and power perspective if the encoding is to be included in the accelerator. Therefore, the weights to be learned by the network have to be restricted to the set $\{-1, 0, 1\}$, thus only requiring addition and subtraction. While this restriction does negatively influence the accuracy achieved with the encoding, the difference is almost negligible according to our results (see \cite{iscas2024}). The results when using the learned encoding on the MNIST and Cifar-10 test datasets respectively, can be seen in \tabref{tab:perf_comparison}. 
\\
As one can see from the ASIC synthesis results, the proposed design is superior in terms of accuracy for the MNIST data set, while maintaining low area / resource requirements. Although the energy requirements are higher than the comparison works, we also incorporate the power of the SRAM blocks into our calculations (one could further use methods from~\cite{sparse_snn_vlsi} to reduce the power consumption of the memory blocks as well). For the compared FPGA implementations, the proposed work is the fastest, both achieving the highest clock frequency and also the largest number of inferences/second while requiring no DSP blocks and using the least power. Meanwhile, the accuracy results for the proposed implementation is close to the top results of the compared works. Although the results for the Cifar-10 dataset are well below the maximum reported accuracies, they coincide with typical accuracy values for feedforward \acp{ann}~\cite{cifar10_fc,cifar10_fc_github}. As a future task, it remains to be investigated how to incorporate convolutional-like layers in \acp{snn} with low complexity while maintaining high accuracy.

\begin{figure}
    \includegraphics[width=\linewidth]{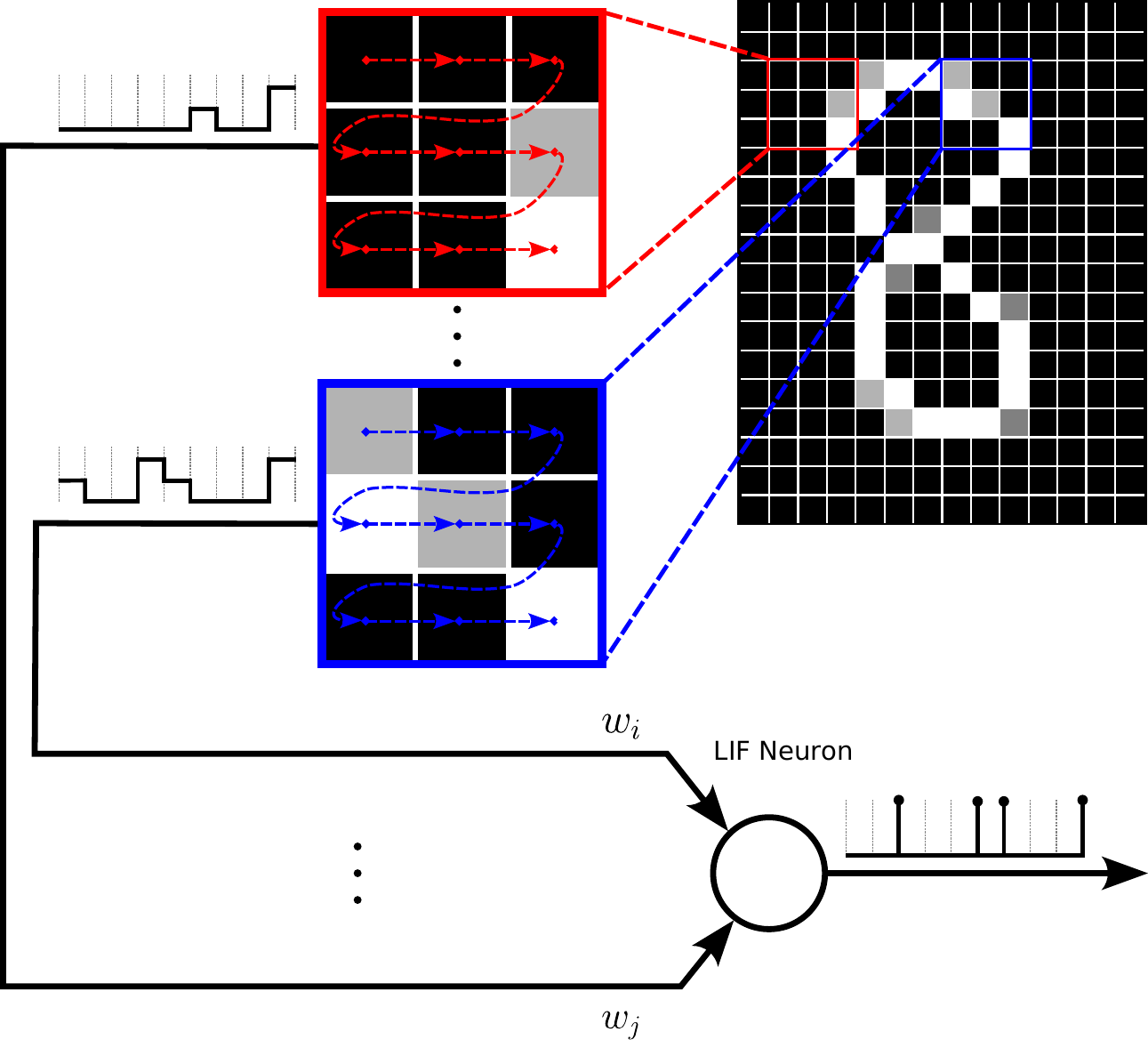}
    \caption{3x3 learned encoding by using patch-wise selection of the pixels with subsequent neuron activation~\cite{iscas2024}.}
    \label{fig:patch}
\end{figure}

\section{Conclusion}
This work discusses different ways for representing spike trains and their respective advantages and disadvantages. It establishes that the previously presented differential time encoding used to encode spike trains has many advantages compared to the absolute time and sampled representation. The spike merger is introduced as a way to join disparate spike trains, with an accompanying efficient hardware design, which serves as the entrypoint to the presented \ac{snn} accelerator architecture. Synthesis results for a Xilinx Ultrascale+ FPGA are shown, as well as ASIC design results using the open-source 7nm ASAP7 PDK alongside a fully open-source software stack and flow. These results clearly show that the design achieves superior accuracy on the MNIST dataset compared to other works, while maintaining low resource requirements.

\section*{Acknowledgments}
This research received funding from the Austrian Research Promotion Agency (FFG) and the Austrian ministry for Climate Action, Environment, Energy, Mobility, Innovation and Technology (BMK) under project no. FO999899263 and funding from the Chips Joint Undertaking under project no. 101112274 (ISOLDE) and its members Austria, Czechia, France, Germany, Italy, Romania, Spain, Sweden, Switzerland. 
\\\\
This work is furthermore supported by: the COMET-K2 ``Center for Symbiotic Mechatronics'' of the Linz Center of Mechatronics (LCM), funded by the Austrian federal government and the federal state of Upper Austria. 

\clearpage
\bibliographystyle{IEEEtran}
\balance
\bibliography{bibliography.bib}

\vfill

\end{document}